\newcounter{vkNoteCounter}
\newcommand{\R}{\mathbb{R}}
\newcommand{\N}{\mathbb{N}}
\newcommand{\mc}{\mathcal}
\newcommand{\mb}{\mathbb}
\newcommand{\defword}[1]{\textbf{\boldmath{#1}}}
\renewcommand{\Pr}{\mathbf{Pr}}
\newcommand{\E}{\mathbf{E}\,}
\newcommand{\terminal}{{\mc T}}
\newcommand{\truth}{{\tau}}
\newcommand{\dGame}{{G}}
\newcommand{\W}{{\mc W}}
\newcommand{\Q}{{\mc Q}}
\newcommand{\A}{{\mc A}}
\newcommand{\Arg}{\mathcal L_c}
\newcommand{\argProt}{{P}}
\newcommand{\supp}{\textrm{supp}\,}
\newtheorem{theorem}{Theorem}
\newtheorem{definition}[theorem]{Definition}
\newtheorem{corollary}[theorem]{Corollary}
\newtheorem{remark}[theorem]{Remark}
\newtheorem{problem}[theorem]{Problem}
\title{(When) Is Truth-telling Favored in AI Debate?}
\author{Vojt\v{e}ch Kova\v{r}\'{i}k\textsuperscript{\rm 1,}\textsuperscript{\rm 2} and Ryan Carey\textsuperscript{\rm 1}\thanks{We are grateful to Chris van Merwijk, Lukas Finnveden, Michael Cohen, and Michael Dennis for feedback and discussions related to this text. The first author was partially supported by MEYS funded project CZ.02.1.01/0.0/0.0/16\_019/0000765 ``Research Center for Informatics''.}\\
\textsuperscript{\rm 1}Future of Humanity Institute\\
University of Oxford\\
\textsuperscript{\rm 2}Artificial Intelligence Center\\
Czech Technical University\\
\emph{vojta.kovarik@gmail.com} 
}
\begin{document}

\maketitle

\begin{abstract}
For some problems, it is difficult for humans to judge the goodness of AI-proposed solutions. \citet{AISvD} propose that in such cases, we may use a debate between two AI systems to assist the human judge to select a good answer.
We introduce a mathematical framework for modelling this type of debate and propose that the quality of debate designs may be measured by the accuracy of the most persuasive answer.
We describe a simple instance of the debate framework called feature debate and analyze the degree to which such debates track the truth.
We argue that despite being very simple, feature debates capture many aspects of practical debates such as the incentives to confuse the judge or stall to prevent losing.
We analyze two special types of debates, those where arguments constitute independent evidence about the topic, and those where the information bandwidth of the judge is limited.
\end{abstract}

\section{Introduction}\label{sec:intro}
In recent years, AI systems have performed impressively in many complex tasks, such as mastering the game of Go \citep{AlphaZero}.
However, these results have largely been limited to tasks with an unambiguous reward function.
To circumvent this limitation, human approval can be used as a measure of success in vague tasks:
For example:
\begin{itemize}
    \item The goodness of a simulated robot backflip is hard to formalize, but an AI system can be trained to maximize the extent to which a human observer approves of its trajectory
    \item The goodness of a film-recommendation is subjective, but an AI system can be trained to maximize the extent to which a human approves of the recommendation.
\end{itemize}

\noindent Unfortunately, once tasks and solutions get too complicated to be fully understood by human users, it is difficult to use human approval to formalize the reward function.
For example, the AlphaGo algorithm could not be trained by maximizing each move's approval since some of its moves looked strange or incorrect to human experts.

\citet{AISvD} suggest addressing this issue by using \emph{AI debate}.
In their proposal, two AI systems are tasked with producing answers to a vague and complex question and then debating the merits of their answers before a human judge.
After considering the arguments brought forward, the human approves one of the answers, thereby allocating reward to the AI system that generated it.
We can apply AI debate to a wide range of questions:
(1) what is the solution of a system of algebraic equations,
(2) which restaurant should I visit today for dinner, or
(3) which of two immigration policies is more socially beneficial.
Moreover, (4) a Go match can be viewed as a debate, where each move is an argument claiming ``my strategy is the better one'', and the winner of the Go-game is called the winner of the debate.

In debates (1) and (4) it is straightforward to ascertain which debater won, and so the most convincing answer always coincides with the most accurate one.
In other debates, such as (2) and (3), misleading arguments may allow a compelling lie to defeat the correct answer.
This raises two central questions for our work: under what circumstances does AI debate track truth? 
And how can debates be designed in order for accurate answers to prevail over less accurate ones?

While researchers have started exploring these questions empirically, the theoretical investigation of AI debate has, at the time of writing this text, mostly been neglected.
The aim of this paper is to begin filling this gap by providing a theoretical framework for reasoning about AI debate, analyzing its basic properties, and identifying further questions that need to be addressed.
So that the work is easy to interpret, we tend to err toward explaining each phenomenon in the simplest model possible, while sketching the extensions necessary to make each model more realistic.

The paper is structured as follows.
Section~\ref{sec:formal}, introduces our model, the \emph{debate game}, and formalizes the problem of designing debate games that promote true answers.
In Section~\ref{sec:feature}, we describe \emph{feature debates}, an instance of the debate game model where the debaters are only allowed to make statements about ``elementary features'' of the world. Section~\ref{sec:guarantees} investigates which feature debates are truth promoting.
Section~\ref{sec:indep_e_and_infolimited_a} continues by analyzing two important subclasses of general debates:
those with ``independent evidence'' and those where the judge's information bandwidth is limited.
Section~\ref{sec:future_work} flags important limitations of the feature debate model and gives suggestions for future work.
Finally, we review relevant literature (Section~\ref{sec:related_work}) and conclude (Section~\ref{sec:conclusion}).
The full proofs are presented in Appendix~\ref{app:proofs}.

\section{The Debate Game Framework}\label{sec:formal}
\subsection{Debate Games}\label{sec:sub:formal}
A debate game (Definition~\ref{def:debate}) is a zero-sum game\footnote{A two-player zero-sum game is one where the utilities satisfy $u_2 = - u_1$. As a result, it suffices to consider the first player's utility $u_1$, and assume that player $2$ is trying to minimize this number.} played between two AI systems that proceeds as follows: the human asks a question about the world, then two AI debaters generate answers and argue that their answer is the better one. Finally, a judge, typically human, uses this dialogue to decide which answer is stronger and allocates the greater share of reward to the debater who produced that answer.

This section formalizes debate games in three steps. First, a definition is given for the \emph{debate environment}  --- the parameters of a debate game that designers generally cannot change --- then the \emph{design elements} --- those parameters that can be changed --- and finally, the debate game. The motivation behind the \emph{debate environment} and \emph{design elements} will be clearer once the debate game is defined (Definition~\ref{def:debate}).

\paragraph{The debate environment}
\begin{definition}[Debate environment]\label{def:env}
A debate environment is a tuple $\mb E = \left< \W, \pi, \Q, \A, \truth, \mc E \right>$ which consists of:
    \begin{itemize}
        \item An arbitrary set $\W$ of \defword{worlds} and a prior distribution \\ $\pi \in \Delta(\W)$ from which the current world is sampled.
        \item A set $\Q$ of \defword{questions}, where each $q \in \Q$ is a text string.
        \item An arbitrary set $\A$ of \defword{answers}.
        \item A mapping $\truth:\Q\times\A\times\W \to [0,\infty)$ which measures the \defword{deviation} of an answer \defword{from the truth} about the world.
        \item A set $\mc E$ of \defword{experiments} $e : \W \to 2^\W$ the judge can perform to learn that the current world $w$ belongs to $e(w)$.
    \end{itemize}
\end{definition}

One example of a debate is a highly general case where $\W$ is the set of all the ways our environment might be, $q\in\Q$ is the set of questions we might ask, and $\A$ are the textual responses that AI debaters might produce. The mapping $\truth$ represents the deviation from the question's true answer, while experiments constitute a cheaper -- and possibly less reliable -- way of obtaining information. For the question \emph{Which restaurant should I go to?}, $\truth(w,q,a)$ could indicate how dissatisfied one would be at each restaurant $a$, and an experiment could indicate my preference between two restaurants by comparing their menus.

We can also consider much more specific cases. For example, $\W$ may represent the set of all legal board positions in Go, and $q$ asks ``What is the optimal next move?''

For many kinds of questions, we may set $\tau(\cdot)=0$ for all correct answers and $\tau(\cdot)=1$ for incorrect or invalid ones.


\paragraph{The design elements of debate}
There are some rules for a debate that a designer can control, called the \emph{design elements}:
\begin{itemize}
    \item the choice of \defword{question} $q$ and \defword{legal answers} $\mc A(q)\subset \mc A$,
    \item \defword{communication language} $\Arg$ (an arbitrary set),
    \item \defword{argumentation protocol} $\argProt : \Q \times \A^2 \times \Arg^* \to 2^{\Arg}$,
    \item \defword{termination} condition $\terminal \subset \mc Q \times \mc A^2 \times \Arg^*$,
    \item \defword{experiment-selection} policy $E : \terminal \to \mc E$, and
    \item \defword{debating incentives} $u_i : \terminal \times 2^\W \to [-1,1]$.
\end{itemize}

In practice, some of these rules will be hard-coded, for example, a designer may restrict the answers to $\mc A = \{\textnormal{``yes''}, \textnormal{``no''}\}$ or make it physically impossible for the debating agents to communicate in anything other than binary code ($\Arg = \{0,1\}^*$).
On the other hand, the designer may outsource the implementation of some rules to a human judge.
For example, a designer can automatically prohibit repetition of a particular string, but the prohibition of rephrasing of previously made points needs to be delegated to the judge. Similarly, a debate can automatically terminate after $N$ rounds, but a judge is needed to end after debaters no longer say anything relevant.\footnote{We can distinguish between the designer of the debate, the person who selects the question $q$, and the judge who determines its winner, who may in-practice be different, or the same person. While the present text does not analyze the role of the human judge in detail, we believe that for such analysis, it is useful to view the judge not as a player in the debate game, but rather as some $J\in \mc J$ which parametrizes $\mc A(q) = \mc A^J(q)$, $\argProt=\argProt^J$, $\terminal=\terminal^J$, $E=E^J$, and $u_i=u_i^J$ (but \emph{not} $\mb E$ and $\Arg$). We do, however, sometimes anthropomorphize parts of debate by speaking as if they were performed by a judge.} 


Using the \emph{debate environment} and \emph{design elements}, a debate game can be formalized as follows:

\begin{definition}[Debate and debate game]\label{def:debate} 
    A \defword{debate} is a tuple $D = \left< \mb E, q, \dGame \right>$, where $\mb E$ is a debate environment, $q\in \Q$ is a question, and $\dGame=(G_q)_{q\in\Q}$ is a \defword{debate game}.
    Formally, each $\dGame_q$ is a two-player zero-sum extensive form\footnote{EFGs are a standard model of sequential decision-making, described, for example, in \citep{osborne1994course}. Partially-observable stochastic games \citep{POSG} constitute an equally valid \citep{FOG} choice of model.} game that proceeds as follows:
    \begin{enumerate}
        \item The world $w$ is sampled from $\pi$ and shown to \defword{debaters} $1$ and $2$ together with the question $q$.
        \item The debaters \emph{simultaneously} pick answers $a_1, a_2 \in \A(q)$.
        \item The debaters alternate\footnote{That is, debater $1$ makes odd arguments $x_1,x_3,\dots$ while $2$ makes $x_2,x_4$, etc.} making \defword{arguments} $x_1, x_2, \ldots \in \Arg$, where $x_j \in \argProt(q,a_1,a_2,x_1,\dots,x_{j-1})$, stopping once $(q,a_1,a_2,x_1,x_2,\dots) = t \in \terminal$ is a \defword{terminated dialogue}.
        \item A single experiment $e=E(t)$ is selected and its result $e(w)$ is used as a context for the next step.
        \item The debaters receive \defword{utilities} $u_1 (t,e(w)) \in [-1,1]$ and $u_2 (t,e(w)) = -u_1 (t,e(w))$.
        \item The answer of the debater with higher utility becomes the \defword{outcome} $o(w,t)$ of $D$ (with ties broken randomly).
        \item \defword{Debate error} is the resulting deviation $\truth (q,w,o(w,t))$ between the outcome of $D$ and the true answer to $q$.
    \end{enumerate}
\end{definition}



The model makes several simplifications, but can be generalized in the following straightforward ways:

\begin{remark}[Natural extensions of debate games]
Debate games may be generalized with:
\begin{itemize}
    \item \textbf{Non-simultaneous answers.} The roles in the answering phase might be asymmetric, in that one debater might see their opponent's answer before selecting their own.\footnote{As in, e.g., the Devil's advocate AI \citep{DAI}.}
    \item \textbf{Debaters with imperfect information.} Instead of having perfect information about $w$, the debaters might only receive some imperfect observation $I_1(w)$, $I_2(w)$. This limitation is particularly relevant for scenarios involving human preferences, such as the restaurant example.
    \item \textbf{Judge interaction.} A judge that asks questions and makes other interjections may be added as a ``chance'' player $J$ with fixed strategy.
    \item \textbf{Dangerous experiments.}
    In the real world, some experiments might have dangerous side-effects. This may be modelled by considering experiments $e : \W \to 2^\W \cup [-\infty,\infty]^3$ which sometimes just give information, but other times bypass the debate and assign utilities $u_1$, $u_2$ and the debate error directly.
    \item \defword{Generalized outcome-selection policies.} Instead of always adopting the more-favoured, the judge may adopt an answer according to a mapping $o : \terminal \times 2^\W \to \mc A$, or may even be given an option of ignoring suspicious or uninformative debates.
\end{itemize}
\end{remark}

\subsection{Properties of debate games}\label{sec:sub:game_theory}
\subsubsection{Debate phases and relation to game theory.}
For the purpose of modelling the debaters' actions, we distinguish between the \defword{answering phase} (step 2 of Definition~\ref{def:debate}) and the \defword{argumentation phase} (step 3).
Once $q$, $w$, and $(a_1,a_2)$ get fixed at the start of the argumentation phase, the utilities of the debaters only depend on the subsequent arguments $x_j$ raised by the agents.
Since the agents have full information about each argument, the \emph{argumentation phase is a two-player zero-sum sequential game with perfect information}. We denote this game $G_{qwa_1a_2}$.
To analyze the answering phase, we first recall an important property of two-player zero-sum games: In every such $G'$, all Nash-equilibrium strategies $\sigma^*$ result in the same expected utility $\E_{\sigma^*} u_1$ \cite[Thm\,3.4.4]{MAS}. This number is called the \defword{value of the game} and denoted $v^*$.
Assuming optimal argumentation strategies, each debater thus knows that playing the argumentation game $G_{qwa_1a_2}$ results in some utility $v^*_{qwa_1a_2}$. This allows them to abstract away the argumentation phase.
By randomizing the order of argumentation and treating both debaters equally, we can further ensure that $v^*_{qwa_2a_1} = - v^*_{qwa_1a_2}$.
As a result, each \emph{answering phase is a symmetric two-player zero-sum matrix game} with actions $\A(q)$ and payoffs $v^*_{wqa_1a_2}$ (to player 1).

These observations have an important implication: Fully general EFGs might contain complications that make finding their solutions difficult. However, both the answering game and the argumentation game belong to highly-specific and well-understood subclasses of EFGs and are thus amenable to simpler solution techniques.

\subsubsection{Measuring the usefulness of debate.}
We measure the suitability of a debate design by the degree to which optimal play by the debaters results in low debate-error.
By default, we focus on the worst-case where both the world and the debate outcome are selected adversarially from the support of their respective distributions. 
We denote the \emph{support} of a probability measure $p$ by $\supp (p)$. 

\begin{definition}[Truth promotion]\label{def:tp}
In the following, $D=\left< \mb E,q,G\right>$ is a debate, $\epsilon\geq 0$, and $w$ always denotes some element of $\textrm{supp}(\pi)$, $\sigma$ a Nash-equilibrium strategy in $G$, and $t$ a terminal dialogue compatible with $\sigma$ in $w$.
$D$ is said to be:
\begin{itemize}
    \item $\epsilon$-\defword{truth promoting} (in the worst-case) in $w$ if for each $\sigma$, we have $\sup_t \truth(q,o(t,w),w) \leq \epsilon$,
    \item $\epsilon$-\defword{truth promoting} if it is $\epsilon$-\defword{truth promoting} in every $w$,
    \item and $\epsilon$-\defword{truth promoting in expectation} if for each $\sigma$, we have $\E_{w\sim\pi} \E_{t\sim\sigma} \, \truth(q,o(t,w),w) \leq \epsilon$.
\end{itemize}
\end{definition}

\noindent When a debate is $0$-truth promoting, we refer to it simply as ``truth-promoting''.
Finally, we formalize the \emph{idealized} version of the design problem as follows:

\begin{problem}[When is debate truth promoting?]\label{prob:characterize_tp}
For given $\left< \mb E, \cdot , G \right>$, characterize those $q\in \Q$ for which any optimal strategy in $\left<\mb E,q,G\right>$ only gives answers with $\truth(q,a,w)=0$.
\end{problem}


\section{Feature Debate}\label{sec:feature}
In order to explore the properties that debates can have, it is useful to have a toy version of the general framework.
In this section, we discuss how questions can be represented as functions of ``elementary features'' of the world and describe a simple debate game in which the arguments are restricted to revealing these features.
This is inspired by \citep{AISvD}, where each world is an image from the MNIST database, elementary features are pixels, and the question is ``Which digit is this?''.
Rather than faithfully capturing all important aspects of debate, the purpose of the toy model is to provide a \emph{simple} setting for investigating \emph{some} aspects. The limitations of the model are further discussed in Section~\ref{sec:future_work}.

\subsection{Defining Feature Debate}\label{sec:sub:feature:toy_model} 

\subsubsection{Questions about functions.}\label{sec:sub:sub:qf}

Many questions are \emph{naturally} expressed as enquiries $q_f$ about some $f : \W \to \mc X$
\begin{equation}
\text{$q_f$ := ``What is output of $f$?''}
\end{equation}
and come accompanied by the answer space $\A = \mc X$ (or $\A = \Delta(\mc X)$).
Examples include questions of measurement (``How far is the Moon?'', ``How much do I weigh?'') and classification (``Which digit is on this picture?'', ``Will person $A$ beat person $B$ in a poker game?'').
For simplicity, we focus on questions $q_f$ about functions $f : \W \to [0,1] = \mc X$ and the truth-mapping $\truth(q,w,a) := |f(w)-a|$.

These assumptions are not very restrictive --- they include binary questions of the type ``Is $Y$ true?'' ($\mc X=\{0,1\}$), and their generalizations ``How likely is $Y$ to be true?'' and ``To what degree is $Y$ true?''.
Any debate about an ``$n$-dimensional question'' can be reduced into $n$ ``$1$-dimensional'' debates, and any function $f : \W \to \R$ can be re-scaled to have range $[0,1]$.


\subsubsection{Feature Debate and Its Basic Properties.}\label{sec:sub:sub:feature}

In feature debate, we assume that worlds are fully described by their \defword{elementary features} --- i.e.\ we suppose that $\W = \Pi_{i =1 }^\infty [0,1]$ and denote $W_i : w=(w_i)_{i=1}^\infty \in \W \mapsto w_i$.
Moreover, we assume that each round consists of each debater making one argument of the form ``the value of $i$-th feature $W_i$ is $x$''.
We consider a judge who can experimentally verify any elementary feature (but no more than one per debate):

\begin{definition}[Feature debate environment]
A \defword{feature debate environment} $\mb F_\pi$ is a debate environment where: 
\begin{itemize}
\item The prior distribution $\pi$ is a (Borel) probability measure on $\W = [0,1]^\N$.
\item Each question pertains to a function $f$, i.e.\ $\Q = \{ q_f \mid f : \W \to [0,1] \textnormal{ measurable} \}$.
\item The answers are $\mc A = [0,1]$.
\item The deviation-from-truth is the distance $\truth(q_f,a,w) = |f(w) - a|$ between the answer $a$ and the truth $f(w)$.
\item Each experiment reveals one feature, i.e.\ $\mc E = \{ e_i \mid i \in \N \}$, where $e_i(w) := \{ \tilde w \mid \tilde w_i=w_i \}$.
\end{itemize}
\end{definition}

For full generality, we may want to assume the debaters can lie about the features, but for the analysis in this paper, we ignore this case. The reason is that if the opponent can point out a lie, and then the judge can test and penalize it, uttering this lie will be sub-optimal.
We thus only consider truthful claims of the form ``$W_i = w_i$''.
With a slight abuse of notation, this allows us to identify the communication language $\Arg$ with the feature-indexing set $\N$.
Any argument sequence $\vec i := \vec i_k := (i_1,\dots,i_k)$ thus effectively reveals the corresponding features. (We write ``$W_{\vec i} = w_{\vec i}$''.)

We suppose the judge has access to the world-distribution $\pi \in \W$ and can update it correctly on new information, but only has ``patience'' to process $2N$ pieces of evidence.
Finally, the debaters are penalized for any deviation between their answer and the judge's final belief and --- to make the debate zero-sum --- are rewarded for any deviation of their opponent.
Adopting the $q_f$ shorthand introduced earlier, the formal definition is as follows:

\begin{definition}[Feature debate]\label{def:feature_debate}
A \defword{feature debate} $F_\pi(f,N) = \left< \mb F_\pi, q_f, G \right>$ is a debate with the following specific rules:
    \begin{itemize}
        \item A randomly selected player makes the first argument.
        \item $\Arg = \N$ and $\argProt(f, a_1, a_2, i_1, \dots, i_k) := \N \setminus \{i_1,\dots,i_k\}$.
        \item After $2N$ arguments have been made, the judge sets $u_1(t,w_{\vec i}) := |\hat f(w_{\vec i}) - a_2| - |\hat f (w_{\vec i}) - a_1|$, where $\hat f(w_{\vec i})$ is the posterior mean
        \begin{equation*}
            \hat f (w_{\vec i}) := \E_\pi \left[ f \mid W_{\vec i} = w_{\vec i} \, \right].\footnote{That is, the judge considers the arguments to be generated independently, using a ``naive-Bayes''-like assumption that ignores the fact that evidence may be selected strategically.}
        \end{equation*}
    \end{itemize}
\end{definition}

\subsection{Optimal play in feature debate}

The zero-sum assumption implies that any shift in $\hat f(w_{\vec i})$ will be endorsed by one debater and opposed by the other (or both will be indifferent). The following symbols denote the two extreme values that the judge's final belief can take, depending on whether the debater who makes the first argument aims for high values of $\hat f(w_{\vec i})$ and the second debater aims for low ones ($\uparrow\downarrow$) or vice versa ($\downarrow\uparrow)$:
\begin{align*}
    \hat f^{\uparrow\downarrow}_N(w) := & \max\nolimits_{i_1} \, \min\nolimits_{i_2} \dots \max\nolimits_{i_{2N-1}} \, \min\nolimits_{i_{2N}} \, \hat f(w_{\vec i}) ,\\
    \hat f^{\downarrow\uparrow}_N(w) := & \min\nolimits_{i_1} \, \max\nolimits_{i_2} \dots \min\nolimits_{i_{2N-1}} \, \max\nolimits_{i_{2N}} \, \hat f(w_{\vec i}).
\end{align*}
Since $\max_x \min_y \varphi(x,y) \leq \min_y \max_x \varphi(x,y)$ holds for any $\varphi$, the second debater always has an edge: $(\forall w \in \W):$ $\hat f^{\uparrow\downarrow}_N(w) \leq \hat f^{\downarrow\uparrow}_N(w)$.
Lemma~\ref{lem:FD_equilibria}\,$(i)$ shows that if the order of argumentation is randomized as in Definition~\ref{def:feature_debate}, the optimal answers lie precisely in these bounds. This result immediately yields a general error bound $(ii)$ which will serve as an essential tool for further analysis of feature debate.
\begin{restatable}[Optimal play in feature debate]{lemma}{NE}\label{lem:FD_equilibria}
(i) The optimal answering strategies in $F_\pi(f,N)$ are precisely all those that select answers from the interval $[\hat f^{\uparrow\downarrow}_N(w), \hat f^{\downarrow\uparrow}_N(w)]$.

(ii) In particular, $F_\pi(f,N)$ is precisely $\max \{ |\hat f^{\uparrow\downarrow}_N(w) - f(w)|, |\hat f^{\downarrow\uparrow}_N(w)-f(w)|\}$-truth promoting in $w$.
\end{restatable}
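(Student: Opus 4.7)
The plan is to first reduce the argumentation phase to a standard minimax game on the posterior mean $\hat f$, then analyse the resulting answering game in closed form, and finally read off both claims.

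Fix $w$, $a_1$, $a_2$ and examine the argumentation subgame. As a function of $\hat f(w_{\vec i})$, the terminal utility $u_1 = |\hat f(w_{\vec i}) - a_2| - |\hat f(w_{\vec i}) - a_1|$ is piecewise linear, weakly increasing when $a_1 > a_2$, weakly decreasing when $a_1 < a_2$, and identically zero when $a_1 = a_2$. So one debater acts purely as a maximiser of $\hat f$ and the other purely as a minimiser; since the subgame has perfect information, backward induction gives a value of $\hat f^{\uparrow\downarrow}_N(w)$ whenever the maximiser moves first and $\hat f^{\downarrow\uparrow}_N(w)$ whenever the minimiser moves first. Writing $L := \hat f^{\uparrow\downarrow}_N(w)$, $U := \hat f^{\downarrow\uparrow}_N(w)$, and averaging over the uniformly random choice of first speaker, a brief case split (including the trivial case $a_1 = a_2$) gives the same formula in every case:
\begin{equation*}
\E[u_1 \mid a_1, a_2] \;=\; \tfrac{1}{2}\bigl(S(a_2) - S(a_1)\bigr), \qquad S(a) := |L - a| + |U - a|.
\end{equation*}

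The function $S$ attains its minimum $U - L$ exactly on $[L, U]$ and is strictly larger outside. Hence against any opponent strategy, debater $1$'s best responses are precisely the points of $[L, U]$: every pair $(a_1, a_2) \in [L, U]^2$ is a pure Nash equilibrium of value $0$, and no action outside $[L, U]$ can lie in the support of any (possibly mixed) NE strategy, which is claim $(i)$. Claim $(ii)$ then follows immediately, since any equilibrium answer $a$ satisfies $a \in [L, U]$ and therefore $|f(w) - a| \leq \max\{|f(w) - L|, |f(w) - U|\}$, with equality achievable at $a = L$ or $a = U$.

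The main technical step I expect to require care is the uniform derivation of the expected-utility formula across the three signs of $a_1 - a_2$, making sure the random speaker order really does weight the two minimax values $L$ and $U$ equally regardless of which debater plays which role, and checking that the indifferent case $a_1 = a_2$ does not introduce spurious equilibria outside $[L, U]$.
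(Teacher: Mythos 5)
Your proof is correct and follows essentially the same route as the paper's: reduce the argumentation subgame to a minimax/maximin computation on $\hat f$, average over the random speaker order, and observe that the resulting answering game rewards exactly the answers minimizing distance to $[\hat f^{\uparrow\downarrow}_N(w), \hat f^{\downarrow\uparrow}_N(w)]$. One point worth noting: your closed form $\E[u_1\mid a_1,a_2]=\tfrac12\bigl(S(a_2)-S(a_1)\bigr)=\textnormal{dist}(a_2,\Lambda)-\textnormal{dist}(a_1,\Lambda)$ is actually the correct one, whereas the paper's intermediate formula $v^*(a,b)=|a-b|-\textnormal{dist}(b,\Lambda)$ fails when $b$ lies strictly beyond $\Lambda$ on its own side (e.g.\ $a=0.5$, $b=1$, $\Lambda=[0.4,0.6]$ gives $v^*=-0.3$, not $0.1$), and its final expression carries a spurious factor of $\tfrac12$; neither slip affects the sign structure, so the stated lemma stands either way.
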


\section{When Do Feature Debates Track Truth?}\label{sec:guarantees} 

In this section, we assess whether feature debates track truth under a range of assumptions.


\subsection{Truth-Promotion and Critical Debate-Length}\label{sec:sub:basic_props}
Some general debates might be so ``biased'' that no matter how many arguments an honest debater uses, they will not be able to convince the judge of their truth.
Proposition~\ref{prop:triv_guarantees} ensures that this is not the case in a typical feature debate:

\begin{restatable}[Sufficient debate length]{proposition}{trivial}\label{prop:triv_guarantees}
$F_\pi(f,N)$ is truth-promoting for functions that depend on $\leq N$ features.
\end{restatable}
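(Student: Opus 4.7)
The plan is to reduce the proposition to Lemma~\ref{lem:FD_equilibria}\,$(ii)$, which says that the debate error in world $w$ is bounded by $\max\{|\hat f^{\uparrow\downarrow}_N(w) - f(w)|,\, |\hat f^{\downarrow\uparrow}_N(w) - f(w)|\}$. Thus it suffices to prove that whenever $f$ depends on at most $N$ features, $\hat f^{\uparrow\downarrow}_N(w) = \hat f^{\downarrow\uparrow}_N(w) = f(w)$ for every $w \in \supp(\pi)$.

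Let $S \subset \N$ be a set of features, with $|S| \leq N$, such that $f$ is a function of $(W_i)_{i \in S}$ alone. The crucial observation is that once the features in $S$ have all been revealed, the posterior mean of $f$ collapses to $f(w)$: for any $\vec i \supseteq S$, we have $\hat f(w_{\vec i}) = \E_\pi[f \mid W_{\vec i} = w_{\vec i}] = f(w)$, because conditioning on the values $(w_i)_{i\in S}$ already determines $f$. Hence the only thing to verify is that either player can \emph{unilaterally} force all features in $S$ to be revealed by the end of the $2N$-round dialogue.

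This is exactly where the bound $|S| \leq N$ is used. Consider the strategy: \emph{always play some index from $S$ that has not yet appeared, and play arbitrarily once $S$ has been exhausted}. Each debater gets $N$ arguments, and the argumentation protocol $\argProt$ forbids repeating previously announced indices, so a player following this strategy reveals $N$ distinct elements of $S$ across its own turns (or reveals everything in $S$ and then switches to arbitrary moves). Either way, $S \subseteq \{i_1,\dots,i_{2N}\}$ at termination, independent of the opponent's play. Applying this to the maximizing player shows $\hat f^{\uparrow\downarrow}_N(w) \geq f(w)$ and applying it to the minimizing player shows $\hat f^{\uparrow\downarrow}_N(w) \leq f(w)$; the same argument works for $\hat f^{\downarrow\uparrow}_N$. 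Combined with Lemma~\ref{lem:FD_equilibria}\,$(ii)$, this gives $0$-truth promotion in every $w$.

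I don't anticipate a real obstacle here. The only mildly delicate point is phrasing the ``always play an unrevealed index of $S$'' strategy cleanly, and being explicit that the conclusion $\hat f(w_{\vec i}) = f(w)$ holds for \emph{every} $w$ (not just $\pi$-a.e.) once $S$ is revealed; this follows from $f$ being measurable with respect to $\sigma(W_i : i\in S)$ and hence pointwise equal to the conditional expectation whenever we condition on the generating features.
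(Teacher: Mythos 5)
Your proposal is correct and follows essentially the same route as the paper's proof: either player can unilaterally reveal all $\leq N$ relevant features within their $N$ turns, forcing $\hat f(w_{\vec i_{2N}}) = f(w)$ and hence $\hat f^{\uparrow\downarrow}_N(w) = \hat f^{\downarrow\uparrow}_N(w) = f(w)$, after which Lemma~\ref{lem:FD_equilibria} finishes the argument. You merely spell out the unilateral-revelation strategy and the measurability caveat in more detail than the paper does.
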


\begin{proof}
In an $N$-round debate about a question that depends on $\leq N$ features, either of the players can unilaterally decide to reveal all relevant information, ensuring that $\hat f(w_{\vec i_{2N}}) = f(w)$. This implies that $\hat f^{\uparrow\downarrow}_N(w) = \hat f^{\downarrow\uparrow}_N(w) = f(w)$. The result then follows from Lemma~\ref{lem:FD_equilibria}.
\end{proof}

\noindent However, Proposition~\ref{prop:triv_guarantees} is optimal in the sense that if the number of rounds is smaller than the number of critical features, the resulting debate error might be very high.

\begin{restatable}[Necessary debate length]{proposition}{impossibility}\label{prop:imposibility}
When $f$ depends on $N+1$ features, the debate error in $F_\pi(f,N)$ can be $1$ (i.e., maximally bad) in the worst-case world and equal to $\frac{1}{2}$ in expectation (even for continuous $f$).
\end{restatable}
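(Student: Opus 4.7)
The plan is to exhibit two explicit constructions of $F_\pi(f,N)$ with $f$ depending on exactly $N+1$ features, each realising one of the two claimed error levels. Both constructions exploit the same counting fact: each debater makes only $N$ arguments, so a single honest debater cannot unilaterally reveal all $N+1$ critical features, and the adversary can always force at least one critical feature to remain hidden from the judge. By Lemma~\ref{lem:FD_equilibria}\,(ii), it is then enough to find settings where the induced posterior mean $\hat f^{\uparrow\downarrow}_N(w)$ or $\hat f^{\downarrow\uparrow}_N(w)$ differs from $f(w)$ by the desired amount.

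For the worst-case bound of $1$, I would fix $\epsilon>0$ small and let $\pi_\epsilon$ be supported on $N+2$ worlds in $\{0,1\}^{N+1}$ (embedded in $[0,1]^\N$ by appending zeros): the all-zeros world $w^*$ receives mass $\epsilon$, and each standard basis vector $v^j$ (a single $1$ in position $j\in\{1,\dots,N+1\}$) receives mass $(1-\epsilon)/(N+1)$. Setting $f(w^*)=0$ and $f(v^j)=1$ makes $f$ genuinely depend on the first $N+1$ features. In world $w^*$, every truthful reveal of a relevant feature returns $0$ and rules out one $v^j$, strictly lowering the posterior mean; so the adversary's best response is to reveal only irrelevant coordinates (which leave the posterior unchanged), while the honest debater can at best eliminate $N$ of the $N+1$ basis vectors. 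A direct computation on the surviving posterior $\{w^*, v^{j^*}\}$ gives mean $(1-\epsilon)/(1+\epsilon N)\to 1$ as $\epsilon\to 0$, so Lemma~\ref{lem:FD_equilibria}\,(ii) gives error in $w^*$ tending to $1$.

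For the expected-error bound of $\tfrac{1}{2}$, I would use parity: let $\pi$ be uniform on $\{0,1\}^{N+1}$ (embedded arbitrarily into $[0,1]^\N$) and set $f(w)=w_1\oplus\cdots\oplus w_{N+1}$. The XOR of $N+1$ independent uniform bits conditioned on any $N$ of them is still uniform, so $\hat f(w_{\vec i}) = \tfrac{1}{2}$ for \emph{every} world and every set of $2N$ revealed features; Lemma~\ref{lem:FD_equilibria} then forces the unique equilibrium answer to be $\tfrac{1}{2}$ in every world, and the error is $|f(w)-\tfrac{1}{2}|=\tfrac{1}{2}$ always, giving expected error exactly $\tfrac{1}{2}$. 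To obtain a continuous analogue (with expected error arbitrarily close to $\tfrac{1}{2}$), I would replace $\pi$ by the uniform distribution on $[0,1]^{N+1}$ and set $f(w)=\varphi((w_1+\cdots+w_{N+1})\bmod 1)$ for a continuous periodic $\varphi:[0,1]\to[0,1]$ approximating the indicator of $[0,\tfrac{1}{2}]$; adding a single uniform variable on the torus yields a uniform distribution, so the posterior mean remains the constant $\int_0^1\varphi$ irrespective of strategy, and the error in every world approaches $\tfrac{1}{2}$.

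The main subtlety I expect is verifying optimality in the concentrated-prior construction: one must check that revealing a relevant feature is always strictly suboptimal for the adversary and that the honest debater has no clever pooling scheme that recovers the missing critical coordinate. Both reduce to a single monotonicity observation --- the posterior mean depends only on the number of surviving basis vectors --- after which the alternating $\max/\min$ expressions defining $\hat f^{\uparrow\downarrow}_N$ and $\hat f^{\downarrow\uparrow}_N$ both collapse to $(1-\epsilon)/(1+\epsilon N)$. The XOR analysis is immediate because the posterior is a single constant regardless of either player's strategy.
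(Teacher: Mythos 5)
Your overall strategy is sound and your expectation argument is essentially the paper's: the XOR-of-$N{+}1$-uniform-bits construction, with the observation that the posterior stays at $\tfrac12$ unless all $N{+}1$ relevant features are revealed (which neither player can force alone), is exactly what the paper uses. Where you diverge is the worst-case part. The paper uses the conjunction $\varphi = W_1 \land \dots \land W_{N+1}$ under the uniform prior on $[0,1]^\N$ in the world $w=(1,1,\dots)$: there $\hat\varphi(w_{\vec i})=0$ unless all $N{+}1$ features are revealed, the opponent simply declines to help, and the error is \emph{exactly} $1$ because the all-ones world lies in the support of the continuous prior while the event ``all relevant coordinates equal $1$'' has prior probability zero. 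Your concentrated prior on $\{w^*, v^1,\dots,v^{N+1}\}$ is a valid alternative and your posterior computation $(1-\epsilon)/(1+\epsilon N)$ checks out, but since $w^*$ must carry positive mass $\epsilon$ to lie in $\supp(\pi)$, you only get error $1-O(\epsilon)$ for each fixed $\epsilon$, i.e.\ error \emph{arbitrarily close to} $1$ rather than equal to $1$ as the proposition states. This is easily repaired by switching to a continuous prior (as the paper does), but as written it is a small shortfall against the literal claim. On the continuous-$f$ addendum, your torus construction $f(w)=\varphi\bigl((w_1+\dots+w_{N+1})\bmod 1\bigr)$ is arguably cleaner than the paper's $\prod_i w_i^K$ approximation, since the posterior is exactly constant under any incomplete reveal; both yield expected error only approaching $\tfrac12$, and the paper is equally loose on this point.
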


\noindent The ``counterexample questions'' which this result relies on are presented in the following Section~\ref{sec:sub:bad_q}.
The formal proof and the continuous case are given in the appendix. 



\subsection{Three Kinds of Very Difficult Questions}\label{sec:sub:bad_q}

We now construct three classes of questions which cause debate to perform especially poorly\footnote{While we focus on results in worst-case worlds, the illustrated behaviour might become the norm with a biased judge (Sec.\,\ref{sec:bias}).}, in ways that are analogous to failures of realistic debates.


\subsubsection{Unfair questions.}
A question may be difficult to debate when \textbf{arguing for one side requires more complex arguments}.
Indeed, consider a feature debate in a world $w$ uniformly sampled from Boolean-featured worlds $\Pi_{i\in \N} W_i = \{0,1\}^\N$, and suppose the debate asks about the conjunctive function $\varphi := W_1 \land \ldots \land W_K$ for some $K\in \N$.
In worlds with $w_1=\dots=w_K=1$, an honest debater has to reveal $K$ features to prove that $\varphi(w) = 1$. On the other hand, a debater arguing for the false answer $a=0$ merely needs to avoid helping their opponent by revealing the features $W_1,\dots,W_K$.
In particular, this setup shows that a debate might indeed require as many rounds as there are relevant features, thus proving the worst-case part of Proposition~\ref{prop:imposibility}).

\subsubsection{Unstable debates.} Even if a question does not bias the debate against the true answer as above, the \textbf{debate outcome} might still be \textbf{uncertain until the very end}.
One way this could happen is if the judge always feels that more information is required to get the answer right. Alternatively, every new argument might come as a surprise to the judge, and be so persuasive that the judge ends up always taking the side of whichever debater spoke more recently.

To see how this behavior can arise in our model, consider the function $\psi := \textnormal{xor}(W_1,\dots,W_K)$ defined on worlds with Boolean features, and the world $w=(1,1,\dots)$.\footnote{Recall that $\psi$ has value $0$ or $1$, depending on whether the number of features $i\leq K$ with $w_i=1$ is even or odd.}
If the world distribution $\pi$ is uniform over $\{0,1\}^\N$, the judge will reason that no matter what the debaters say, the last unrevealed feature from the set $\{W_1,\dots,W_K\}$ always has an equal chance of flipping the value of $\psi$ and keeping it the same, resulting in $\hat \psi(w_{\vec i})=\frac{1}{2}$.
As a result, the only optimal way of playing $F_\pi(\psi,N)$ is to give the wrong answer $a=\frac{1}{2}$, unless a single debater can, by themselves, reveal all features $W_1,\dots,W_K$. This happens precisely when $K\leq N$.
In particular, the case where $K = N+1$ proves the ``in expectation'' part of Proposition~\ref{prop:imposibility}.

To achieve the ``always surprised and oscillating'' pattern, we consider a prior $\pi$ under which each each feature $w_i$ is sampled independently from $\{0,1\}$, but in a way that is skewed towards $W_i=0$ (e.g., $\Pr[W_i=0] = 1 - \delta$ for some small $\delta >0$).
The result of this bias is that no matter which features get revealed, the judge will always be more likely to believe that ``no more features with value $W_i=1$ are coming'' --- in other words, the judge will be very confident in their belief while, at the same time, shifting this belief from $0$ to $1$ and back each round.

\subsubsection{Distracting evidence.} For some questions, there are misleading arguments that appear plausible and then require extensive counter-argumentation to be proven false.
By making such arguments, a dishonest debater can stall the debate, until the judge ``runs out of patience'' and goes with their possibly-wrong surface impression of the topic.
To illustrate the idea, consider the uniform distribution $\pi$ over $\W=[0,1]^\N$ and a question $q_f$ about some $f : \W \to [0,1]$ that only depends on the first $K$ features.
Suppose, for convenience of notation, that the debaters give answers $a_1=1$, $a_2=0$ and the sampled world is s.t. $f(w)=1$ and $w_{K+1}=w_{K+2}=\dots=1$. 
To adversarially modify $f$, we first define a function $S : [0,1]^2 \to [0,1]$ as $S(x,y)=1$ if either $x \neq 1$ or $x = y = 1$ and as $S(x,y)=0$ otherwise.
By replacing $f$ by $f'(w) := f(w) S(w_m,w_n)$, where $n>m>K$, $S$ introduces an ``unlikely problem'' $x=1$ and an ``equally unlikely fix'' $y=1$, thus allowing the dishonest player $2$ to ``stall'' for one round.
Indeed, the presence of $S(\cdot,\cdot)$ doesn't initially affect the expected value of the function in any way.
However, if player $2$ reveals that $W_m = w_m = 1$, the expectation immediately drops to $0$, forcing player $1$ to ``waste one round'' by revealing that $W_n=w_n=1$.
To make matters worse yet, we could ask about $\hat f'(w) := f(w) \Pi_{i=1}^d S(w_{m_i},w_{n_i})$, or use a more powerful stalling function $S(x,y_1 \land \dots \land y_k)$ that forces the honest player to waste $k$ rounds to ``explain away'' a single argument of the opponent.

\subsection{Detecting Debate Failures}
When a debater is certain that their opponent will not get a chance to react, they can get away with making much bolder claims.\footnote{Conversely, some realistic debates might provide first-mover advantage due to anchoring and framing effects.} The resulting ``unfairness'' is not a direct source for concern because the order of play can easily be randomized or made simultaneous. However, we may wish to measure the last-mover advantage in order to detect whether a debate is tracking truth as intended. 
The proof of Lemma~\ref{lem:FD_equilibria} (in particular, equation \eqref{eq:value_calculation}) yields the following result:

\begin{corollary}[Last-mover advantage]\label{cor:last_mover_adv}
If optimal debaters in the feature debate $F_\pi(f,N)$ give answers $a_1,a_2 \in [\hat f^{\uparrow\downarrow}_N(w), \hat f^{\downarrow\uparrow}_N(w)]$, the debater who argues second will obtain $\lvert a_1 - a_2 \rvert$ expected utility.
\end{corollary}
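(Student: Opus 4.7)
The plan is to read off the realized posterior mean $\hat f(w_{\vec i})$ under optimal argumentation from the proof of Lemma~\ref{lem:FD_equilibria}, substitute it into the utility formula $u_1(t, w_{\vec i}) = |\hat f(w_{\vec i}) - a_2| - |\hat f(w_{\vec i}) - a_1|$, and show the absolute values collapse. Concretely, I would invoke equation \eqref{eq:value_calculation} from the proof of the lemma, which identifies the outcome of optimal argumentation as $\hat f^{\uparrow\downarrow}_N(w)$ when the opener tries to push $\hat f$ upward, and as $\hat f^{\downarrow\uparrow}_N(w)$ when the opener tries to push it downward.

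Without loss of generality assume $a_1 \geq a_2$ (the opposite case is symmetric). The function $\hat f \mapsto |\hat f - a_2| - |\hat f - a_1|$ is non-decreasing, so player $1$ prefers $\hat f$ large and player $2$ prefers it small; this fixes who pushes in which direction. I would then split on which debater argues second. If player $2$ is second, player $1$ opens with an up-move, so optimal play drives the posterior to $\hat f^{\uparrow\downarrow}_N(w)$; since by hypothesis $\hat f^{\uparrow\downarrow}_N(w) \leq a_2 \leq a_1$, the absolute values both open with $\hat f$ on the low side and simplify to $u_1 = (a_2 - \hat f^{\uparrow\downarrow}_N(w)) - (a_1 - \hat f^{\uparrow\downarrow}_N(w)) = a_2 - a_1$, so player $2$ obtains $a_1 - a_2 = |a_1 - a_2|$. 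The mirror computation, with the opener pushing down and posterior landing at $\hat f^{\downarrow\uparrow}_N(w) \geq a_1 \geq a_2$, gives player $1$ (now second) utility $a_1 - a_2 = |a_1 - a_2|$.

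No genuine obstacle is expected, since the hard game-theoretic content is already packaged inside Lemma~\ref{lem:FD_equilibria}. The only point needing care is the sign bookkeeping and verifying that the interval hypothesis $a_1, a_2 \in [\hat f^{\uparrow\downarrow}_N(w), \hat f^{\downarrow\uparrow}_N(w)]$ places the realized posterior on a single side of both answers so the two absolute values collapse without case-splitting on which of $a_1, a_2$ is closer.
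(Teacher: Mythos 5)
Your proposal is correct and follows essentially the same route as the paper: the paper simply cites equation~\eqref{eq:value_calculation}, $v^*(a,b) = |a-b| - \textnormal{dist}(b,\Lambda)$, and notes that the distance term vanishes when both answers lie in $\Lambda = [\hat f^{\uparrow\downarrow}_N(w), \hat f^{\downarrow\uparrow}_N(w)]$, whereas you re-derive that special case directly by locating the realized posterior at the appropriate endpoint of $\Lambda$ and collapsing the absolute values. The sign bookkeeping in both of your cases checks out, so this is just an unpacked version of the paper's one-line derivation.
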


Recall that, by Lemma~\ref{lem:FD_equilibria}, all answers from the interval $[\hat f^{\uparrow\downarrow}_N(w), \hat f^{\downarrow\uparrow}_N(w)]$ are optimal.
Corollary~\ref{cor:last_mover_adv} thus implies that even if the agents debate optimally, some portion of their utility -- up to $\delta := \hat f^{\downarrow\uparrow}_N(w) - \hat f^{\uparrow\downarrow}_N(w)$ -- depends on the randomized choice of argumentation order.\footnote{For illustration, a (literally) extreme case of last-mover advantage occurs in the ``oscillatory'' debate from Section~\ref{sec:sub:bad_q}, where the interval $[\hat f^{\uparrow\downarrow}_N(w), \hat f^{\downarrow\uparrow}_N(w)]$ spans the whole answer space $[0,1]$.}
Incidentally, Lemma~\ref{lem:FD_equilibria} implies that the smallest possible debate error is $\delta/2$ (which occurs when the true answer is $f(w) = ( \hat f^{\uparrow\downarrow}_N(w) - \hat f^{\downarrow\uparrow}_N(w) ) / 2$).
This relationship justifies a simple, common-sense heuristic: If the utility difference caused by reversing the argumentation order is significant, our debate is probably far from truth-promoting.

\section{Two Important Special Cases of Debate}\label{sec:indep_e_and_infolimited_a}
As a general principle, a narrower class of debates might allow for more detailed (and possibly stronger) guarantees.
We describe two such sub-classes of \emph{general} debates and illustrate their properties on variants of feature debate.

\subsection{Debate with Independent Evidence}\label{sec:indep_evidence}

When evaluating solution proposals in practice, we sometimes end up weighing its ``pros'' and ``cons''.
In a way, we are viewing these arguments as (statistically) independent evidence related to the problem at hand.
This is often a reasonable approximation, e.g., when deciding which car to buy, and sometimes an especially good one, e.g., when interpreting questionnaire results from different but independent respondents.
We now show how to model these scenarios as feature debates with statistically independent features, and demonstrate the particularly favourable properties.

\subsubsection{Feature debate with independent evidence.}
As a mathematical model of such setting, we consider $\mc X=\{0,1\}$, $\W := \Pi_{i=1}^\infty [0,1]$, and denote by $W_i : (w,x) \mapsto w_i \in [0,1] $ and $X : (w,x) \mapsto x \in \{0,1\}$ the coordinate projections in $\W \times \mc X$.
Informally, we view the last coordinate as an unknown feature of the world and the debate we construct will be asking ``What is the value of this unknown feature?''.
To enable inference about $X$, we consider some probability distribution $\mb P$ on $\W \times \mc X$. (For convenience, we assume $\mb P$ is discrete.)
Finally, to be able to treat arguments of the form ``$W_i = w_i$'' as independent evidence related to $X$, we assume that the features $W_i$, $i\in \N$, are mutually independent conditional on $X$.\footnote{In other words, $\mb P(W_j=w_j\mid X=x)$ is equal to $\mb P(W_j=w_j\mid X=x, W_{\vec i_k} =w_{\vec i_k})$ for every $x$, $w_j$, and $w_{\vec i_k}$.}
To describe this setting as a feature-debate, we define $\pi$ as the marginalization of $\mb P$ onto $\W$ and consider the question $q=$ ``How likely is $x=1$ in our world?'', i.e. ``what is the value of $f$, where $f(w) := \E_{\mb P} \left[ X \mid \W=w \right]$''.
We denote the resulting ``independent evidence'' feature debate $F_\pi(q,N)$ as $F^{\textnormal{ie}}(\mb P,N)$.



\subsubsection{Judge's belief and its updates.}
Firstly, recall that any probability can be represented using its odds form, which is equivalent to the corresponding log-odds form:
\begin{align*}
    \mb P(A) \in [0,1] & \longleftrightarrow \mb P(A) / \mb P(\neg A) \in [0,\infty] \\
    & \longleftrightarrow \log ( \mb P(A) / \mb P(\neg A)) \in [-\infty,\infty].
\end{align*}
Moreover, when expressed in the log-odds form, Bayes' rule states that updating one's belief in hypothesis $H$ in light of evidence $A$ is equivalent to shifting the log-odds of the prior belief by $\log \left( \mb P\left(A\mid H \right) / \, \mb P\left(A\mid \neg H \right) \right)$.

At any point in the debate $F^{\textnormal{ie}}(\mb P,N)$, the judge's belief $\hat f(w_{\vec i}) = \E [ f \mid W_{\vec i}=w ]$ is, by the definition of $f$, equal to the conditional probability $\mb \mb P(X=1 \mid W_{\vec i}=w_{\vec i})$.
To see how the belief develops over the course of the debate, denote by $\beta_{\vec i}(w)$ the corresponding log-odds form.
Initially, $\hat f_\emptyset(w)$ is equal to the prior $\mb P(X=1)$, which corresponds to $\beta_{\emptyset}(w) = \log ( \mb P(X=1) / \mb P(X=0) ) =: p_0$.
Denoting
\begin{equation*}
\textrm{ev}_j(w) := \log \left( \frac{\mb P(W_j=w_j\mid X=1)}{\mb P(W_j=w_j\mid X=0)} \right) ,
\end{equation*}
the above form of the Bayes' rule implies that upon hearing an argument ``$W_j = w_j$'', the judge will update their belief according to the formula $\beta_{\vec i,j}(w) = \beta_{\vec i}(w) + \textrm{ev}_j(w)$.
In other words, the arguments in $F^\textnormal{ie}(\mb P,N)$ combine additively:
\begin{equation}\label{eq:Fie_update}
    \beta_{\vec i_n}(w) = p_0 + \textrm{ev}_{i_1}(w) + \dots + \textrm{ev}_{i_n}(w) .
\end{equation}

\subsubsection{Optimal strategies and evidence strength.}
Recall that positive (negative) log-odds correspond to probabilities closer to $1$ (resp. 0).
Equation \eqref{eq:Fie_update} thus suggests that for any $w$, the arguments $\N$ can be split into three ``piles'' from which the debaters select arguments: $\mc N_\uparrow := \{ j \in \N \mid \textrm{ev}_j(w) > 0 \}$ containing arguments supporting the answer ``$X=1$ with probability $100\%$'', the pile $\mc N_\downarrow = \{ j \in \N \mid \textrm{ev}_j(w) < 0 \}$ of arguments in favor of the opposite, and the irrelevant arguments $\mc N_{\textnormal{ir}} = \{ j \in \N \mid \textrm{ev}_j(w)=0\}$.
As long as the debaters give different answers, one of them will use arguments from $\mc N_\downarrow$, while the other will only use $\mc N_\uparrow(w)$ (both potentially falling back to $\mc N_{\textnormal{ir}}$ if their pile runs out).\footnote{Formally, these argumentation incentives follow from the first paragraph in the proof of Lemma~\ref{lem:FD_equilibria}.}
Moreover, a rational debater will always use the strongest arguments from their pile, i.e. those with the highest \defword{evidence strength} $|\textrm{ev}_j(w)|$.
Correspondingly, we denote the ``total evidence strength'' that a players can muster in $n$ rounds as 
\begin{align*}
& \textrm{Ev}^\uparrow_n(w) := \max \left\{ \sum\nolimits_{j \in J} \textrm{ev}_{i_k}(w) \mid J\subset \N, |J|=n \right\} \textnormal{ and} \\
& \textrm{Ev}^\downarrow_n(w) := \max \left\{ \sum\nolimits_{j \in J} (- \textrm{ev}_{i_k}(w)) \mid J\subset \N, |J|=n \right\}.
\end{align*}
(To make the discussion meaningful, we assume the evidence sequence $(\textrm{ev}_j(w))_{j}$ is bounded and the maxima above are well-defined.)
The equation \eqref{eq:Fie_update} implies that -- among optimal debaters -- one always selects arguments corresponding to $\textrm{Ev}^\uparrow_N(w)$ while the other aims for $\textrm{Ev}^\downarrow_N(w)$. Since this holds independently of the argumentation order, we get $\hat f^{\uparrow\downarrow}_N(w) = \hat f^{\downarrow\uparrow}_N(w)$.
Together with Lemma~\ref{lem:FD_equilibria}, this observation yields the following result:

\begin{corollary}[Unique optimal answer]
In the answering phase of any $F^{\textnormal{ie}}(\mb P,N)$ with bounded evidence, the only Nash equilibrium is to select the $\hat f^*_N(w)$ which satisfies
\begin{align*}
    \hat f^{\uparrow\downarrow}_N(w) & = \hat f^{\downarrow\uparrow}_N(w) = \hat f^*_N(w) := \textnormal{the probability}\\
    & \! \! \! \! \! \! \! \! \! \! \! \! \! \textnormal{corresponding to the log-odds } p_0 + \textrm{Ev}^\uparrow_N(w) - \textrm{Ev}^\downarrow_N(w) .
\end{align*}
\end{corollary}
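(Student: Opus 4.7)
The plan is to formalize the decoupling argument sketched in the paragraph preceding the corollary and combine it with Lemma~\ref{lem:FD_equilibria}. I would begin by fixing answers $a_1 \neq a_2$ (say $a_1 > a_2$; the other case is symmetric). By the opening paragraph of the proof of Lemma~\ref{lem:FD_equilibria}, the first debater is incentivized in the argumentation phase to maximize the final belief $\hat f(w_{\vec i})$, while the second debater minimizes it. Because equation~\eqref{eq:Fie_update} expresses $\hat f$ as a strictly monotone function of the running log-odds $\beta_{\vec i}(w) = p_0 + \sum_k \textrm{ev}_{i_k}(w)$, this argumentation game is strategically equivalent to a minimax game on the additive objective $\sum_{k=1}^{2N} \textrm{ev}_{i_k}(w)$.

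Next, I would show that the value of this additive minimax game is $\textrm{Ev}^\uparrow_N(w) - \textrm{Ev}^\downarrow_N(w)$ regardless of who speaks first. The crucial structural fact is that $\mc N_\uparrow(w)$, $\mc N_\downarrow(w)$, and $\mc N_{\textnormal{ir}}(w)$ are pairwise disjoint. As a result, the myopic greedy strategy for the maximizer --- always pick the unrevealed index with the largest $\textrm{ev}_j(w)$ --- draws its $N$ indices from $\mc N_\uparrow(w)$ (falling back to $\mc N_{\textnormal{ir}}(w)$ if $\mc N_\uparrow(w)$ has fewer than $N$ elements), and symmetrically for the minimizer. A backward induction on the number of remaining rounds then shows that these greedy strategies are jointly optimal: at each step the current mover's greedy pick weakly dominates any alternative, because the additive decoupling means a deviation hurts the deviator by the amount lost on their own contribution while leaving the opponent's future optimum unchanged (disjointness prevents ``stealing'' arguments the opponent wanted). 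Summing contributions yields exactly $\textrm{Ev}^\uparrow_N(w) - \textrm{Ev}^\downarrow_N(w)$ independently of the order of play. Converting back via the log-odds-to-probability map gives $\hat f^{\uparrow\downarrow}_N(w) = \hat f^{\downarrow\uparrow}_N(w) = \hat f^*_N(w)$, so the interval of optimal answers from Lemma~\ref{lem:FD_equilibria}\,(i) collapses to a single point and the claim follows.

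The main obstacle I anticipate is the decoupling step of the induction --- making precise that a deviation from greedy play cannot hurt the opponent more than it hurts the deviator. The bounded-evidence hypothesis enters precisely here, ensuring that the maxima defining $\textrm{Ev}^\uparrow_N(w)$ and $\textrm{Ev}^\downarrow_N(w)$ are attained and that ``pick the best remaining argument'' is well-defined at every round. With those details in hand, the remaining manipulation is routine application of Bayes' rule, and the case $a_1 = a_2$ is trivial since both debaters are indifferent throughout the argumentation phase and Lemma~\ref{lem:FD_equilibria}\,(i) already pins the unique common answer to $\hat f^*_N(w)$.
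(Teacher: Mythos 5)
Your proposal is correct and follows essentially the same route as the paper, which derives this corollary directly from the in-text discussion preceding it: the additive log-odds update of equation~\eqref{eq:Fie_update}, the partition into the piles $\mc N_\uparrow$, $\mc N_\downarrow$, $\mc N_{\textnormal{ir}}$, greedy selection of the strongest arguments yielding $\textrm{Ev}^\uparrow_N(w) - \textrm{Ev}^\downarrow_N(w)$ independently of the order of play, and then Lemma~\ref{lem:FD_equilibria}\,(i) to collapse the interval of optimal answers to a point. Your backward-induction justification of the greedy strategies is in fact somewhat more careful than the paper's informal treatment.
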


\subsubsection{Debate error.}
To compute the debate error in $F^\textnormal{ie}(\mb P,\cdot)$, denote the strength of the evidence that remains in each debater's ``evidence pile'' after $n$ rounds as
\begin{align}
R^\uparrow_n(w) & := \sum\nolimits_{j\in \mc N_\uparrow} \textrm{ev}_j(w) - \textrm{Ev}^\uparrow_n(w), \\
R^\downarrow_n(w) & := \sum\nolimits_{j\in \mc N_\downarrow} (-\textrm{ev}_j(w)) - \textrm{Ev}^\downarrow_n(w) .
\end{align}
Furthermore, assume that additionally to $(\textrm{ev}_j(w))_j$ being bounded, the total evidence $\lim_{n\to\infty} \textrm{Ev}^a_n(w)$ in favor of $a$ is infinite for at most one $a\in \{\uparrow,\downarrow\}$.\footnote{Note that the intuition ``$\lim_n R^{(\cdot)}_n(w) = 0$'' only fits if either $\mc N_a$ or $\mc N_\textnormal{ie}$ is infinite. If a debater eventually has to reveal evidence against their own case, the numbers $R^{(\cdot)}_n(w)$ will get negative.}
Since the true answer $f(w)=\mb P(X=1\mid \W = w)$ corresponds to $p_0 + \sum_{i\in \N} \textrm{ev}_j(w) = p_0 + \sum_{j\in \mc N_\uparrow} \textrm{ev}_j(w) - \sum_{j\in \mc N_\downarrow} (-\textrm{ev}_j(w))$, \emph{the difference between the (log-odds forms of) the judge's final belief and the optimal answer is $R^\uparrow_N(w) - R^\downarrow_N(w)$.}

\subsubsection{Early stopping and online estimation of the debate error.}
If we further assume that the debaters reveal the strongest pieces of evidence first, we can predict a debate's outcome before all $N$ rounds have passed. If the $n$-th argument of player $p$ has strength $|\textrm{ev}_i(w)| =: s_{p,n}(w)$, we know that further $N - n$ rounds of debate cannot reveal more than $(N-n) s_{p,n}(w)$ evidence in favor of $p$.
This implies that as soon as the currently-losing player is no longer able to shift the judge's belief beyond the midpoint between the initial answers, we can stop the debate without affecting its outcome.
If we further know that the question at hand depends on $K$ features or less, we can also bound the difference between $f(w)$ and $\hat f(w_{\vec i})$.
Indeed, in the worst-case scenario, all remaining arguments were all in favor of the same player $p$ --- even in this case, the (log-odds form of) $f(w)$ can be no further than $\max_{p=1,2} s_{p,N}(w) (K - 2N)$ away from the log-odds form of the final belief $\hat f(w_{\vec i_{2N}})$.

\subsection{Debate with Information-Limited Arguments}\label{sec:sub:costly_features}

Sometimes, a single argument cannot convey all relevant information about a given feature of the world.
For example, we might learn that a person $A$ lives in a city $B$, but not their full address, or -- in the language of feature debate -- learn that $w_i$ lies in the interval $[0.5,1]$, rather than understanding right away that $w_i=0.75$.
In such cases, it becomes crucial to model the judge's information bandwidth as limited.

\subsubsection{Feature Debate Representation.}
In feature debate, we can represent each elementary feature $w_i \in [0,1]$ in its binary form (e.g., $(0.75)_2 = 0.11000\dots$), and correspondingly assume that each argument reveals one bit of some $w_i$.
More specifically, we assume that (a) the debaters make arguments of the form ``the $n$-th bit of $i$-th feature has value $b$'', (b) they have to reveal the $n$-th bit of $w_i$ before its $(n+1)$-th bit, and -- using the same argument as in feature debate -- (c) their claims are always truthful.
Informally, each argument in this ``information-limited'' feature\footnote{The name is justified since $\N^2$ is isomorphic to $\N$ and thus $F_\pi^l(f,N)$ is formally equivalent to some feature debate $F_{\tilde \pi}(\tilde f,N)$.} debate $F_\pi^l(f,N)$ thus corresponds to selecting a dimension $i\in \N$ and doing a ``$50\%$ zoom'' on $w$ along this dimension (Figure~\ref{fig:infolimited}).

\begin{figure}[b]
    \centering
        \tikzset{%
            dot/.style={circle, draw, fill=black, inner sep=0pt, minimum width=4pt},
            top/.style={anchor=south, inner sep=5pt},
        }
    \begin{tikzpicture}[thick, scale=2]
        \draw[fill=yellow,line width=0.25mm] (0,0) rectangle (1,1);
        \node[dot] (A) at (0.6,0.25) {};
        \node[top] at (A) {$w$};
    \end{tikzpicture}
    \begin{tikzpicture}[thick, scale=2]
        \draw[,line width=0.15mm] (0,0) rectangle (1,1);
        \draw[fill=yellow,line width=0.25mm] (0.5,0) rectangle (1,1);
        \node[dot] (A) at (0.6,0.25) {};
        \node[top] at (A) {$w$};
    \end{tikzpicture}
    \begin{tikzpicture}[thick, scale=2]
        \draw[,line width=0.15mm] (0,0) rectangle (1,1);
        \draw[,line width=0.15mm] (0.5,0) rectangle (1,1);
        \draw[fill=yellow,line width=0.25mm] (0.5,0) rectangle (0.74,1);
        \node[dot] (A) at (0.6,0.25) {};
        \node[top] at (A) {$w$};
    \end{tikzpicture}
    \begin{tikzpicture}[thick, scale=2]
        \draw[,line width=0.1mm] (0,0) rectangle (1,1);
        \draw[,line width=0.1mm] (0.5,0) rectangle (1,1);
        \draw[,line width=0.1mm] (0.5,0) rectangle (0.74,1);
        \draw[,line width=0.1mm] (0,0) rectangle (1,0.5);
        \draw[fill=yellow,line width=0.25mm] (0.5,0) rectangle (0.74,0.5);
        \node[dot] (A) at (0.6,0.25) {};
        \node[top] at (A) {$w$};
    \end{tikzpicture}
    \caption{Each argument in an information-limited debate reduces the set of feasible worlds by ``zooming-in'' on the sampled world $w = (0.6,0.25)$ along one dimension of $\W$. Here, the first two arguments provide information about $w_1$ (the $x$-axis) and the third one about $w_2$.}
    \label{fig:infolimited}
\end{figure}
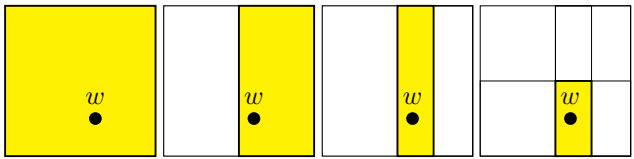

\subsubsection{Performance bounds.}
By offering intermediate steps between features being completely unknown and fully revealed, the debate $F^{\textnormal{ie}}(\cdot)$ allows for more nuanced guarantees than those from Section~\ref{sec:sub:basic_props}.
(Informally stated, the assumptions of the Proposition~\ref{prop:info_limited} can be read as ``the values of $f$ differ by at most $L$ across different worlds, with feature $w_i$ being responsible for up to a $\frac{1}{2^i}$-fraction\footnote{While similar results hold for general ``weight ratios'' between features, we chose weights $\frac{1}{2^i}$ for their notational convenience.} of the variance''.)
\begin{restatable}{proposition}{infoLimited}\label{prop:info_limited}
Suppose that $f : \W \to \R$ is $L$-Lipschitz continuous\footnote{A function is Lipschitz continuous with constant $L \geq 0$ (w.r.t. a metric $\varrho$) if it satisfies $|f(x)-f(y)| \leq L \varrho(x,y)$.} w.r.t. the metric $\rho(w,w')$ = $\sum_{i\in \N} 2^{-i} |w_i-w'_i|$ on $\W$.
Then $F_\pi^l(f,N)$ is $L / 2^{\lfloor \sqrt{N} \rfloor}$-truth promoting.
\end{restatable}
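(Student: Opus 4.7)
The plan is to reduce the claim to showing that a single debater can force the set of worlds consistent with the revealed bits to be small, and then to exhibit an explicit bit-allocation strategy that achieves this. By Lemma~\ref{lem:FD_equilibria}(ii), the truth-promotion constant in $w$ equals $\max\{|\hat f^{\uparrow\downarrow}_N(w)-f(w)|,\,|\hat f^{\downarrow\uparrow}_N(w)-f(w)|\}$, so it suffices to bound each of these two quantities by $L/2^{\lfloor\sqrt{N}\rfloor}$.

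Next I would note that after a dialogue $\vec i$ the worlds compatible with the revealed bits form a product box $B_{\vec i}\subset \W$ whose $i$-th coordinate is a dyadic interval of length $2^{-k_i}$, where $k_i$ denotes the number of bits of $W_i$ that have been revealed (with $k_i=0$ if none). Since $\hat f(w_{\vec i})$ is the posterior mean of $f$ given $w'\in B_{\vec i}$, and $w\in B_{\vec i}$, the Lipschitz assumption yields
\[
|\hat f(w_{\vec i})-f(w)| \;\leq\; \sup_{w'\in B_{\vec i}}|f(w')-f(w)| \;\leq\; L\sup_{w'\in B_{\vec i}}\rho(w',w) \;\leq\; L\sum_{i=1}^\infty 2^{-i-k_i}.
\]
The whole task thus reduces to showing that a single debater, using only their $N$ arguments, can force $\sum_i 2^{-i-k_i}\leq 2^{-\lfloor\sqrt{N}\rfloor}$.

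Set $m:=\lfloor\sqrt{N}\rfloor$ and consider the unilateral strategy of revealing the first $m$ bits of each of the features $W_1,\ldots,W_m$. This costs only $m^2\leq N$ arguments, and any additional bits revealed by the opponent can only shrink $B_{\vec i}$, so the debater guarantees $k_i\geq m$ for $i\leq m$ regardless of opposing play. A direct estimate then gives $\sum_{i\leq m}2^{-i-m}+\sum_{i>m}2^{-i}\leq 2\cdot 2^{-m}$; to absorb the remaining factor of two one may reallocate the budget to equalise the contributions $2^{-i-k_i}$ (e.g.\ take $k_i=M-i$ with $M(M-1)/2\leq N$, for which the total equals $(M+1)2^{-M}$), landing at or below the stated $2^{-\lfloor\sqrt{N}\rfloor}$.

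Finally, I would invoke the standard upper/lower value argument twice. In each of the games defining $\hat f^{\uparrow\downarrow}_N(w)$ and $\hat f^{\downarrow\uparrow}_N(w)$, both the ``max'' and the ``min'' player possess the unilateral shrinking strategy above, which forces $\hat f(w_{\vec i})\in[f(w)-L/2^{\lfloor\sqrt{N}\rfloor},\,f(w)+L/2^{\lfloor\sqrt{N}\rfloor}]$ irrespective of the opponent; consequently both minimax values lie in that interval, yielding the claim. The main obstacle I expect is matching the constant exactly: the naive ``$m$ bits of $m$ features'' allocation is loose by a factor of two, so the real work lies in optimising the bit schedule so that $\sum_i 2^{-i-k_i}$ comes in at or below $2^{-\lfloor\sqrt{N}\rfloor}$ using only $N$ arguments.
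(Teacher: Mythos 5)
Your proposal follows essentially the same route as the paper's proof: reduce to bounding $|\hat f^{\uparrow\downarrow}_N(w)-f(w)|$ and $|\hat f^{\downarrow\uparrow}_N(w)-f(w)|$ via Lemma~\ref{lem:FD_equilibria}, bound the posterior deviation by $L$ times the $\rho$-diameter of the box of worlds consistent with the revealed bits, and shrink that diameter with a unilateral triangular bit schedule --- your reallocation $k_i = M-i$ at cost $M(M-1)/2$ with total $(M+1)2^{-M}$ is exactly the paper's argument sequence $(1,1,2,1,2,3,\dots)$, which after $1+\dots+n$ rounds yields diameter $\frac{n+2}{2^{n+1}}$ (take $M=n+1$). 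The one obstacle you flag --- matching the constant $2^{-\lfloor\sqrt{N}\rfloor}$ exactly --- is genuine and is not actually resolved in the paper either: the balanced triangular allocation is essentially optimal for minimizing $\sum_i 2^{-i-k_i}$ under a budget on $\sum_i k_i$, and the paper's closing chain $\frac{n+2}{2^{n+1}} \leq \frac{2n}{2^{n+1}} \leq \frac{1}{2^n}$ fails at the second step for $n\geq 2$, so with $n=\lfloor\sqrt{N}\rfloor$ one only obtains the slightly weaker guarantee $L\cdot\frac{n+2}{2^{n+1}}$ (the stated constant is recovered asymptotically if one spends the full budget, taking $n\approx\sqrt{2N}$, but not for small $N$). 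In short, your argument is as complete as the paper's, and your honesty about the loose constant identifies a real defect in the published proof rather than a gap in your own.
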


\noindent In contrast to Proposition~\ref{prop:triv_guarantees}, the $L$-Lipschitz assumption thus allows us to get approximately correct debate outcomes long before having full knowledge of all features.
Note that the importance of Proposition~\ref{prop:info_limited} is not in the particular choice of weights, but rather in showing that argument weights can be translated into debate error bounds.

\section{Limitations and Future Work}\label{sec:future_work}

The language introduced so far does not fully capture all aspects of realistic AI debates --- due to space limitations, it is simply not possible to cover all design variants and emergent phenomena in this initial work. 
In this section, we outline some notable avenues for making the debate model more accurate and useful, either by using an alternative instantiation of the general framework from Section~\ref{sec:formal} or by extending the toy model from Section~\ref{sec:feature}.
We start by discussing the modifications which are likely to improve the debate's performance or applicability, and follow-up by those which could introduce new challenges.
For suggested future work on AI debate that is not specific to \emph{modelling}, we refer the reader to \citet{AISvD}.

\subsection{Plausible Improvements to Debate}\label{sec:sub:feature_generalizations}

\subsubsection{Commitments and high-level claims.} 
As \citet{AISvD} suggest, an important reason why debate might work is that the debaters can make abstract or high-level claims that can potentially be falsified in the course of the debate.
For example, debaters might start out disagreeing whether a given image --- of which the judge can only inspect a single pixel --- depicts a dog or a cat. Debater $1$ might then claim that ``here in the middle, there is a brown patch that is the dog's ear'', to which their opponent counters ``the brown patch is a collar on an otherwise white cat''.
Such exchanges would continue until one debater makes a claim that is (i) inconsistent with their answer, (ii) inconsistent with ``commitments'' created by previous arguments, or (iii) specific enough to be verified by the judge.
In this example, (iii) might arise with an exchange ``this pixel is white, which could not happen if it belonged to a brown dog's ear'', ``actually, the pixel is brown'', which allows the judge to determine the winner by inspecting the pixel.

This ability to make high-level claims and create new commitments will often make the debate more time-efficient and incentivize consistency.
Since consistency should typically advantage debaters that describe the true state of the world, commitments and high-level claims seem critical for the success of debate.
We thus need a communication language $\Arg$ that is rich enough to enable more abstract arguments and a set of \defword{effect rules} \citep{prakken2006formal} which specify how new arguments affect the debaters' commitments.
To reason about such debates, we further need a model which relates the different commitments, to arguments, initial answers, and each other.

One way to get such a model is to view $\W$ as the set of assignments for a Bayesian network.
In such setting, each question $q\in\Q$ would ask about the value of some node in $\W$, arguments would correspond to claims about node values, and their connections would be represented through the structure of the network.
Such a model seems highly structured, amenable to theoretical analysis, and, in the authors' opinion, intuitive. It is, however, not necessarily useful for practical implementations of debate, since Bayes networks are computationally expensive and difficult to obtain.

\subsubsection{Detecting misbehaviour.}
One possible failure of debate is the occurrence of stalling, manipulation, collusion, or selective provision of evidence. To remedy these issues, we can introduce specific countermeasures for these strategies. One option for quantifying the contribution of discourse to a human's understanding is to measure the changes in their ability to pass ``exams'' \citep{StuartHumanUnderstanding}. Another countermeasure would be to instantiate a meta-debate on the question of whether a debater is arguing fairly. However, such a meta-debate may, in some cases be even more challenging to judge correctly than the original question.

\subsubsection{Alternative utility functions.}\label{sec:sub:proportional_utils}
We have considered utility functions that are linear in each debater's deviation $\Delta_p := |\hat f(w_{\vec i}) - a_p|$ from the judge's belief.
However, other approaches such as ``divide the total reward in proportion to $\frac{1}{\Delta_p}$'' might give different but interesting results.

\subsubsection{Real-world correspondence.}
To learn which real-world debates are useful on the one hand, and which theoretical issues to address on the other, a better understanding of the correspondence between abstract debate models and real-world debates is needed.
For example, which real-world debates can be modelled as having independent evidence, being Lipschitz, having distracting arguments, and so on?

\subsection{Obstacles in Realistic Debates}

\subsubsection{Sub-optimal judges.}\label{sec:bias}
Some debates might have a canonical idealized way of being judged, which the actual judge deviates from at some steps.
A fruitful avenue for future research is to investigate the extent to which debate fails gracefully as the judge deviates from this ideal.
For example, games are canonically judged as giving a score of $1$ to the winner and $0$ to the loser. We could thus measure how much (and in what ways) the utility function can be modified before the game's winner is changed.
Another approach would be to consider a judge that was biased. An unbiased Bayesian judge would set the prior to the true world-distribution, update the prior on each revealed piece of evidence, and, at the end of the debate, calculate the corresponding expectation over answers. To model a judge who performs the first step imperfectly, we could consider a biased prior $\tilde \pi \in \Delta \W$ (distinct from the true world distribution $\pi$) and calculate the utilities using the corresponding biased belief
    $\tilde \hat f(w_{\vec i}) := \E_{\tilde \pi} \left[ f \mid W_{\vec i} = w_{\vec i} \, \right]$.

\subsubsection{Manipulation.}
So far, we have described failures that come from ``the judge being imperfect in predictable ways''.
However, real-world debates might also give rise to undesirable argumentation strategies inconceivable in the corresponding simplified model.
For example, a debater might learn to exploit a bug in the implementation of the debate or, analogously, find a ``bug'' in the human judge. 
Worse yet, debaters might attempt to manipulate the judge using bribery or coercion. Note that for such tactics to work, the debaters need not be able to carry out their promises and threats --- it is merely required that the judge believes they can.

\subsubsection{Collusion.}
Without the assumption of zero-sum rewards, the debaters gain incentives to collaborate, possibly at the expense of the accuracy of the debate.
Such ``I won't tell on you if you don't tell on me'' incentives might arise, for example, if both agents are given a positive reward if both answers seem good (or negative reward when the debate becomes inconclusive).

\subsubsection{Sub-optimal debaters.}
If debaters argue sub-optimally, we might see new types of fallacious arguments.
We should also expect to see stronger debaters win even in situations that advantage their weaker opponent.
There could also be cases where the losing player complicates the debate game on purpose to increase variance in the outcome.
Both of these phenomena can be observed between humans in games like Go, and we should expect analogous phenomena in general AI debate.
One way of modelling asymmetric capabilities is to let two debaters run the same debating algorithm with a different computational budget (e.g., Monte Carlo tree search with a different number of rollouts).



\section{Related Work}\label{sec:related_work}
\subsubsection{AI safety via debate.}
The kind of debate we sought to model was introduced in \citep{AISvD}, wherein it was proposed as a scheme for safely receiving advice from highly capable AI systems.
In the same work, Irving et al. carried out debate experiments on the MNIST dataset and proposed a promising analogy between AI debates and the complexity class PSPACE.
We believe this analogy can be made compatible with the framework introduced in our Section~\ref{sec:formal}, and deserves further attention.
\citet{ourDebateBlogpost} then demonstrated how to use debate to train an image classifier and described the design elements of debate in more detail.
AI debate is closely related to two other proposals:
(1) ``factored cognition'' \citep{factoredCognition}, in which a human decomposes a difficult task into sub-tasks, each of which they can solve in a manageable time (similarly to how debate eventually zooms in on an easily-verifiable claim), and
(2) ``iterated distillation and amplification'' \citep{christiano2018supervising}, in which a ``baseline human judgement'' is automated and amplified, similarly to how AI debates might be automated.

\subsubsection{Previous works on argumentation.}
Persuasion and argumentation have been extensively studied in areas such as logic, computer science, and law.
The introduction by \citet{prakken2006formal} describes a language particularly suitable for our purposes.
Conversely,  the extensive literature on argumentation frameworks \citep{Dung} seems less relevant. The main reasons are (i) its focus on non-monotonic reasoning (where it is possible to retract claims) and (ii) that it assumes the debate language and argument structure as given, whereas we wish to study the connection between arguments and an underlying world model.
AI systems are also being trained to identify convincing \emph{natural-language} arguments --- for a recent example, see, e.g., \cite{perez2019finding}.

\subsubsection{Zero-sum games.}
As noted in the introduction, we can view two-player zero-sum games as a debate that aims to identify the game's winner (or an optimal strategy).
Such games thus serve as a prime example of a problem for which the state of the art approach is (interpretable as) debate \citep{AlphaZero}.
Admittedly, only a small number of problems are formulated as two-player zero-sum games \emph{by default}.
However, some problems can be reformulated as such games:
While it is currently unclear how widely applicable such ``problem gamification'' is, it has been used for combinatorial problems \citep{xu2019learning} and theorem proving \citep{gameSemantics}.
Together with \citet{AlphaZero}, these examples give some evidence that the AI debate might be competitive (with other problem-solving approaches) for a wider range of tasks.

\section{Conclusion}\label{sec:conclusion}
We have introduced a general framework for modelling AI debates that aim to amplify the capabilities of their judge and formalized the problem of designing debates that promote accurate answers.
We described and investigated ``feature debate'', an instance of the general framework where the debaters can only make statements about ``elementary features'' of the world.
In particular, we showed that if the debaters have enough time to make all relevant arguments, feature debates promote truth.
We gave examples of two sub-classes of debate: those where the arguments provide statistically independent evidence about the answers and those where the importance of different arguments is bounded in a known manner. We have shown that feature debates belonging to these sub-classes are approximately truth-promoting long before having had time to converge fully.
However, we also identified some feature-debate questions that incentivize undesirable behaviour such as stalling, confusing the judge, or exploiting the judge's biases, resulting in debates that are unfair, unstable, and generally insufficiently truth-promoting.
Despite its simplicity, feature debate thus allows for modelling phenomena that are highly relevant to issues we expect to encounter in realistic debates.
Moreover, its simplicity makes feature debate well-suited for the initial exploration of problems with debate and testing of the corresponding solution proposals.
Finally, we outlined multiple ways in which our model could be made more realistic --- among these, allowing debaters to make high-level claims seems like an especially promising avenue.

\section{Acknowledgements}\label{sec:acknowledgements}
This work was supported by the Leverhulme Centre for the Future of Intelligence, Leverhulme Trust, under Grant RC-2015-067.

\bibliography{refs}
\bibliographystyle{aaai}

\appendix

%

\section{Proofs}\label{app:proofs}

We now give the full proofs of the results from the main text.

\begin{proof}[Proof of Lemma~\ref{lem:FD_equilibria}]
Fix $\pi$, $f$, $w$, and $N$, and denote $\Lambda := [\hat f^{\uparrow\downarrow}, \hat f^{\downarrow\uparrow}] := [\hat f^{\uparrow\downarrow}_N(w), \hat f^{\downarrow\uparrow}_N(w)]$.
By definition of $u_1(t) := |\hat f(w_{\vec i}) - a_2| - |\hat f(w_{\vec i}) - a_1|$, the utility $u_1$ is non-decreasing in $\hat f(w_{\vec i})$ when $a_1 \geq a_2$ (and in turn, $u_2$ is non-increasing in $\hat f(w_{\vec i})$).
It follows that \emph{an} optimal argumentation strategy is for player 1 to maximize $\hat f(w_{\vec i})$ and for player 2 to minimize it (and vice versa when $a_1 \leq a_2$).
When calculating utilities, we can, therefore, assume that the final belief is either $\hat f(w_{\vec i}) = \hat f^{\uparrow\downarrow}$ or $\hat f(w_{\vec i}) = \hat f^{\downarrow\uparrow}$, depending on whether the player who argues second gave the higher or lower answer.

Suppose the players gave answers $\{a,b\}$, and the one arguing for $b$ goes second. In this scenario, denote by $v^*(a,b)$ the utility this player receives if both argument optimally.
By applying the above observation (about optimal argumentation strategies) in all possible relative positions of $a$, $b$, and $\hat f^{\uparrow\downarrow} \leq \hat f^{\downarrow\uparrow}$, we deduce that
\begin{equation}\label{eq:value_calculation}
    v^*(a,b) = |a-b| - \textnormal{dist}(b,\Lambda).
\end{equation}

Denote by $v_1^*(a_1,a_2) := \frac{1}{2}v^*(a_2,a_1) - \frac{1}{2}v^*(a_1,a_2)$ the expected utility of player 1 if answers $a_1$ and $a_2$ are given (by players 1 and 2) and players argument optimally (the expectation is w.r.t. the randomized argumentation order).
Using the formula for $v^*(a,b)$, we get
\begin{align*}
    & v_1^*(a_1,a_2) = \frac{1}{2}v^*(a_2,a_1) - \frac{1}{2}v^*(a_1,a_2) \\
    & = \frac{1}{2} \left( |a-b| - \textnormal{dist}(a_1,\Lambda) \right) - \frac{1}{2}( |a-b| - \textnormal{dist}(a_2,\Lambda) ) \\
    & = \frac{1}{2}\textnormal{dist}(a_2,\Lambda) - \frac{1}{2}\textnormal{dist}(a_1,\Lambda).
\end{align*}
It follows that, independently of what strategy the opponent uses, it is always beneficial to give answers from within $\Lambda$ (and that within this interval, all answers are equally good in expectation).\footnote{Incidentally, a more interesting behavior arises if the order of argumentation is fixed; in such a case, player 2 ``flips a coin'' between $\hat f^{\uparrow\downarrow}$ and $\hat f^{\downarrow\uparrow}$ and player 1 picks any of the strategies $\sigma_1$ with $\supp (\sigma_1) \subset \Lambda$ and $\E [ a_1 \mid a_1 \sim \sigma_1 ] = \frac{1}{2}(\hat f^{\uparrow\downarrow} + \hat f^{\downarrow\uparrow})$.}
\end{proof}


\begin{proof}[Proof of Proposition~\ref{prop:imposibility}]
For the worst-case part, suppose that $\pi$ is a uniform distribution over $[0,1]^{\N}$, $w=1$, and let $\varphi := W_1 \land \dots \land W_{N+1}$ be as in Sec.\,\ref{sec:sub:bad_q}.
In $F_\pi(\varphi,N)$, we have $\hat \varphi(w_{\vec i}) = 1$ if $\vec i \supset \{1,\dots,N+1\}$ and $\hat \varphi(w_{\vec i}) = 0$ otherwise.
In particular, we have $\hat \varphi^{\uparrow\downarrow}_N(w) = \hat \varphi^{\downarrow\uparrow}_N(w) = 0$, since the minimizing player can always select $i_k$ from $\N \setminus \{1,\dots,N+1\}$. Since $\varphi(w)=1$, the result follows from Lemma~\ref{lem:FD_equilibria}.

For the ``in expectation'' part, suppose that $\pi$ is a uniform distribution over $\{0,1\}^{N+1}$ (where each $w\in \{0,1\}^{N+1}$ is extended by an infinite sequence of zeros) and $\psi(w) := \textrm{xor}(w_1,\dots,w_{N+1})$.
Recall that the value of $\psi(w)$ will be 1 if the total number of $w_i$-s with $w_i=1$ is odd and $0$ when the number is even.
Until all of the $N+1$ features of any $w$ are revealed, there will be a $50\%$ prior probability of an odd number of them having value $1$ and $50\%$ prior probability of an even number of them having value $1$.
As a result, we have $\hat \psi(w_{\vec i})=\frac{1}{2}$ unless $\vec i \supset \{1,\dots,N+1\}$.
By not revealing any of these features in $F_\pi(\psi,N)$, either of the players can thus achieve $\hat \psi(w_{\vec i})=\frac{1}{2}$.
It follows that $\hat \psi^{\uparrow\downarrow}_N(w) = \hat \psi^{\downarrow\uparrow}_N(w) = \frac{1}{2}$, the only optimal strategy is to give the answer $a=\frac{1}{2}$, and --- since $f$ takes on only values $0$ and $1$ -- the expected debate error is $\frac{1}{2}$.

Our counterexample function $\varphi$ was discontinuous. However, the same result could be achieved by using $f(w) := \Pi_{i=1}^{N+1} w_i$ together with the uniform prior \emph{over $\{0,1\}^\N$} --- this yields a function that is continuous over its (discrete) domain.
Alternatively, we could use the uniform prior over $[0,1]^\N$ together with the continuous approximation $\Pi_{i=1}^{N+1} w_i^K$ (for some large $K\in \N$) of $\varphi$.
Similarly, we could find a continuous approximation of $\psi$.
This observation concludes the proof of the last part of the proposition.
\end{proof}




\begin{proof}[Proof of Proposition~\ref{prop:info_limited}]
Let $w\in \W$.
For $\vec i$, denote by $\W(\vec i)$ the set of worlds $w'$ would reveal the same bits as $w$ under the argument sequence $\vec i$.
To show that the debate is $\epsilon$-truth promoting in $w$, it suffices, by Lemma~\ref{lem:FD_equilibria}, to show that
\begin{equation*}
    f(w) - \epsilon \leq \hat f^{\uparrow\downarrow}_N(w) \textnormal{ and } \hat f^{\downarrow\uparrow}_N(w) \leq f(w) + \epsilon .
\end{equation*}
To get the second inequality, recall that $\hat f^{\downarrow\uparrow}_N(w)$ is defined as the expected value of $f$ under the judge's belief after $N$ rounds of debate, under the assumption that player 1 attempts to drive the expectation as low as possible and player 2 as high as possible.
Suppose that the player who is the first to argue chooses his arguments $i_1$, $i_3$, $i_5$, \dots in a way that minimizes the diameter $\max \{ \rho(w,w') \mid w'\in \W(\vec i) \}$ of $\W(\vec i)$, i.e. by following the sequence $(1, 1, 2, 1, 2, 3, 1, \dots)$.
A simple calculation shows that after $N=1+\dots+n$ rounds, this necessarily gets the diameter of $\W(\vec i)$ to $\frac{n+2}{2^{n+1}}$ or less (depending on the opponent's actions $i_2, i_4, \dots$).
Since $f$ is $L$-Lipschitz, we get that $f(w') \leq f(w) + L \cdot \frac{n+2}{2^{n+1}}$ on $\W(\vec i)$.
Since the optimal strategy of minimizing $\hat f^{\downarrow\uparrow}_N(w)$ can perform no worse than this, we get $\hat f^{\downarrow\uparrow}_N(w) \leq f(w) + L \cdot \frac{n+2}{2^{n+1}}$.

We derive the first inequality analogously. To finish the proof, we need to rewrite the inequality in terms of $N$.
We have $\frac{n+2}{2^{n+1}} \leq \frac{2n}{2^{n+1}} \leq \frac{1}{2^n}$.
Since we have $N \geq 1 + \dots + \lfloor \sqrt{N} \rfloor$ for each $N\in\N$, we get $\hat f^{\downarrow\uparrow}_N(w) \leq f(w) + L \cdot \frac{1}{2^{\lfloor \sqrt{N}\rfloor}}$, which concludes the proof.
\end{proof}

\end{document}